\documentclass[preprint,12pt,authoryear,nopreprintline]{elsarticle}

\journal{Transportation Research Part C: Emerging Technologies}
\date{}
\biboptions{authoryear,round}

\usepackage{amsmath,amssymb,amsfonts,amsthm}
\usepackage{booktabs}
\usepackage{graphicx}
\usepackage{algorithm}
\usepackage{algorithmic}
\usepackage{multirow}
\usepackage{xcolor}
\usepackage{subcaption}
\usepackage{enumitem}
\usepackage{colortbl}
\usepackage{seqsplit}
\usepackage[hidelinks]{hyperref}
\usepackage{cleveref}

\makeatletter
\def\ps@pprintTitle{%
  \let\@oddhead\@empty
  \let\@evenhead\@empty
  \let\@oddfoot\@empty
  \let\@evenfoot\@empty}
\makeatother

\newtheorem{proposition}{Proposition}

\newcommand{\R}{\mathbb{R}}
\newcommand{\E}{\mathbb{E}}
\DeclareMathOperator{\clip}{clip}
\DeclareMathOperator{\sigmoid}{sigmoid}

\crefname{assumption}{Assumption}{Assumptions}
\crefname{proposition}{Proposition}{Propositions}
\crefname{theorem}{Theorem}{Theorems}
\crefname{lemma}{Lemma}{Lemmas}
\crefname{remark}{Remark}{Remarks}
\crefname{algorithm}{Algorithm}{Algorithms}
\crefname{equation}{Eq.}{Eqs.}

\begin{document}

\begin{frontmatter}

\title{Completion-Aware Cross-Fidelity Offline-to-Online Reinforcement Learning for Multi-Line Bus Holding}

\author[csu]{Yifan Zhang}
\ead{erzhu419@gmail.com}
\ead{204201048@csu.edu.cn}
\author[csu]{Liang Zheng}
\ead{zhengliang@csu.edu.cn}
\address[csu]{Central South University, Changsha, Hunan, China}

\begin{abstract}
Exploratory reinforcement learning (RL) on an operating bus fleet is
impractical, while policies trained only from historical data cannot acquire
new experience. Hybrid Offline-and-Online (H2O) RL combines fixed target replay
with simulator interaction, but the inexpensive online simulator can differ
from the target in transition and event-duration dynamics. We study this
cross-fidelity problem for multi-line bus holding and address a failure mode in
which lower generalized passenger time coexists with incomplete passenger
journeys.

We formulate asynchronous holding as a 24,000-s chronological semi-Markov
decision process (SMDP) with exact passenger-time accounting and a mixed
discrete--continuous action. An immutable Simulation of Urban MObility (SUMO)
dataset contains 3,284,718 transitions from 240 episodes, eight behavior
policies, and 30 environment cells. An improved H2O framework (H2O+) actor is
adapted to transit, initialized by implicit Q-learning (IQL), and continued for
25,000 online-simulator decision
events. The deployed policy then applies a preregistered completion-safety
reserve: in low-load tail-bus states with remaining demand, it raises the
physical hold to 60 s. Its direct parent uses the identical actor and inherited
action transform but omits this final reserve.

Formal confirmation used 1,350 fresh SUMO episodes: ten independent
environment blocks, three demand levels, five training seeds, and 45 fixed
physical behaviors. Averaged over the candidate--parent pairs, the reserve-augmented policy
reduced generalized passenger time from 2,484.86 to 2,392.73 s per departed
passenger (difference $-92.12$ s; $-3.71\%$), increased completion from
0.96185 to 0.96650 (difference $+0.00466$), and reduced unfinished passengers
from 655.42 to 578.20. It improved the primary endpoint and satisfied a 0.003
completion noninferiority margin in all ten paired blocks; both one-sided exact
sign-test p-values were 0.0009766, so the preregistered intersection--union test
passed. Benefits had the same direction at all demand levels and training
seeds. The confirmed claim is limited to this fixed candidate versus its fixed
direct parent in simulator-to-simulator endpoint evaluation.
\end{abstract}

\begin{keyword}
Bus holding control \sep offline-to-online reinforcement learning \sep completion-aware control \sep cross-fidelity simulation \sep passenger welfare
\end{keyword}

\end{frontmatter}

\section{Introduction}
\label{sec:intro}

High-frequency bus service is vulnerable to bunching: a delayed vehicle faces
more waiting passengers and longer dwell, while its follower faces fewer,
causing an initially small headway perturbation to grow
~\citep{rezazada2024review}. Holding an early bus at selected stops can restore
spacing, but it also adds time for passengers already on board
~\citep{delgado2012holding}. Analytical headway control and schedule-reliability
control address different operational goals
~\citep{daganzo2009headway,xuan2011dynamic}; RL can represent richer stochastic
and multi-line interactions
~\citep{alesiani2018reinforcement,wang2020real,wang2023multiobjective}.

Learning through exploration on an operating fleet is generally unacceptable.
Offline RL can use historical transitions, but its policy is vulnerable outside
logged action support. Online learning in a simulator supplies adaptive
experience, but an inexpensive model can differ from the target in travel
times, dwell propagation, demand events, and the physical duration between
asynchronous decisions. H2O RL combines immutable
target replay with interaction in an imperfect simulator~\citep{niu2022h2o};
H2O+ generalizes this construction across offline and online base learners
~\citep{niu2025h2oplus}. The transit question is therefore not only whether
simulator interaction improves an offline policy, but whether the resulting
behavior remains service-complete when transferred back to the target
simulator.

Bus holding makes this distinction consequential. Stop decisions are
asynchronous, several buses can request control at the same physical time, and
exact zero hold is much more common than a positive duration. More importantly,
an objective that integrates waiting and in-vehicle passenger time does not by
itself ensure that journeys finish before a fixed evaluation horizon. During
development we observed precisely this objective--constraint mismatch: lower
generalized passenger time could coexist with a completion-rate loss large
enough to fail a preregistered service boundary. The final policy therefore
separates learning from deployment. A frozen H2O+ actor supplies the state
dependent action, while a deterministic completion-safety reserve modifies
only a bounded set of low-load tail-bus states with remaining demand.

The resulting evidence chain is summarized in \Cref{tab:evidence_chain}. The
target is a SUMO microsimulation of 12
directional services and 389 scheduled vehicle trips. The offline replay
contains 3,284,718 chronological transitions from 240 complete episodes. The
online environment is a causal event-driven simulator with matched timetable,
capacity, passenger-journey, state, action, and reward contracts. The final
candidate and direct parent share five frozen terminal-quarter actors and differ
only in the last deterministic completion-safety transform; lineage controls
retain their own frozen actors. Candidate design used prior development
outcomes, but the final candidate, direct parent, analysis, and ten-block formal
environment were fixed before formal outcomes were accessed.

\begin{table*}[t]
\centering
\small
\caption{Separation of learning, adaptive development, and formal confirmation.
Only the final column supplies confirmatory evidence.}
\label{tab:evidence_chain}
\begin{tabular}{>{\raggedright\arraybackslash}p{0.29\textwidth}
                >{\raggedright\arraybackslash}p{0.29\textwidth}
                >{\raggedright\arraybackslash}p{0.32\textwidth}}
\toprule
\textbf{Frozen learning evidence} & \textbf{Adaptive development} &
\textbf{Fresh formal confirmation} \\
\midrule
Target-SUMO replay; calibrated online simulator; five IQL-initialized H2O+
actors; 25,000 simulator events; fixed endpoint checkpoints. &
Terminal actor taper and deterministic hold transforms were developed over
successive preregistered rounds. Development outcomes informed the final
completion-safety reserve and therefore are not confirmatory. &
The fixed candidate and fixed direct parent were evaluated on 30 untouched
SUMO cells: ten independent blocks crossed with three demand levels. Exact
paired block tests used all five training seeds within each block. \\
\bottomrule
\end{tabular}
\end{table*}

Formal confirmation comprised 1,350 new SUMO episodes and 45 fixed physical
behaviors. Relative to its direct behavior parent, the reserve-augmented policy
reduced generalized passenger time by 92.12 s per departed passenger
($3.71\%$), increased completion by 0.00466, and reduced unfinished passengers
by 77.22 on average. The primary endpoint improved and the completion
noninferiority condition passed in all ten independent blocks. The result is
not attributed to density-ratio correction: held-out ratio diagnostics did not
generalize, and the sealed actor training used unit simulator weights.

This work makes four contributions. First, it defines a chronological
finite-horizon SMDP representation that serializes simultaneous
controls while charging exact generalized passenger time once per physical
interval. Second, it provides an audited multi-policy target dataset and a
causal online simulator under a common transit contract. Third, it combines a
stabilized H2O+ actor with an explicit, state-conditioned completion-safety
reserve whose direct effect can be isolated against an identical parent.
Fourth, it supplies a preregistered formal comparison on fresh environment
blocks, with the independent unit, noninferiority margin, and claim boundary
fixed before outcome access. The formal claim concerns the final reserve
relative to its identical direct parent in this simulator scenario family;
\Cref{sec:discussion:limitations} collects the limits on broader interpretation.

\section{Related Work}
\label{sec:related}

The study connects offline-to-online RL, transfer across transition dynamics,
and bus holding. We distinguish the algorithmic lineage from the mechanism
that is actually confirmed in the final experiment.

\subsection{Offline and offline-to-online reinforcement learning}

Soft actor-critic (SAC) provides the entropy-regularized, off-policy backbone
used by the online learners in this study~\citep{haarnoja2018soft}. Its later
formulation adds automatic entropy-temperature adjustment
~\citep{haarnoja2018applications}.
Conservative Q-learning (CQL) depresses values for actions outside offline
support~\citep{kumar2020cql}; IQL instead fits an
expectile state value and extracts an advantage-weighted policy without
evaluating unseen actions in its fitted target~\citep{kostrikov2022iql}. We use
IQL to initialize the frozen H2O+ actors and SAC-style updates for their online
continuation.

Offline-to-online fine-tuning can destroy a useful offline policy when early
online data shift the state--action distribution and bootstrap errors
propagate. Balanced replay with a pessimistic critic ensemble is one
stabilization route: it prioritizes online and near-on-policy offline samples
~\citep{lee2022balanced}, rather than simply fixing a 50/50 replay split.
Calibrated Q-learning (Cal-QL) constrains conservative values to a reference
policy's scale~\citep{nakamoto2023calql}. Policy expansion (PEX) retains an
offline policy and adaptively composes it with a newly learned policy
~\citep{zhang2023pex}. Reinforcement learning with prior data (RLPD) mixes
prior and newly collected replay while learning from random initialization
~\citep{ball2023efficient}, whereas warm-start reinforcement learning (WSRL) begins
from an offline policy, collects a frozen-policy warmup buffer, and then trains
from online replay~\citep{zhou2025wsrl}. These methods motivate the stabilization
controls in the development lineage, but the formal comparison in this paper
does not claim a universal ranking among their source algorithms.

Reward-guided conservative Q-learning (RG-CQL) has combined offline training
and online fine-tuning for coordinating ride-pooling with public transit
~\citep{hu2025rgcql}. We therefore make no broad novelty claim for
offline-to-online RL in transportation. Our focus is an asynchronous fixed-route
holding problem with a target-simulator dynamics gap and an explicit completion
boundary.

\subsection{Transfer across dynamics}

Off-dynamics RL corrects source-domain rewards using a conditional
target-to-source dynamics log ratio estimated by domain classifiers
~\citep{eysenbach2021off}. H2O combines a fixed target-domain
dataset with interaction in an imperfect simulator and uses dynamics-aware
correction when simulator transitions are poorly supported by target dynamics
~\citep{niu2022h2o}. H2O+ broadens that construction across offline and online
base learners~\citep{niu2025h2oplus}.

Our adaptation includes physical event duration in a joint/marginal classifier,
but the final training profile uses unit simulator weights. Thus the classifier
is a domain-shift diagnostic, not the confirmed mechanism. Its empirical
validation is reported in \Cref{sec:exp:validity,app:ratio}; the formal comparison
isolates the action reserve applied to a fixed learned actor.

\subsection{Bus holding and service completion}

The bus-bunching literature spans demand, supply, and control mechanisms
~\citep{rezazada2024review}. \citet{daganzo2009headway} studies headway-based
holding and its stability, whereas \citet{xuan2011dynamic} uses arrival
deviations from a virtual schedule to improve schedule reliability. These are
distinct control objectives. \citet{delgado2012holding} compares holding with
boarding limits under passenger-time objectives. Real-time optimization
incorporates changing demand and running times
~\citep{sanchezmartinez2016realtime}, while predictive control combines
headway forecasts with dynamic holding~\citep{andres2017predictive}.

RL formulations include high-frequency single-service control
~\citep{alesiani2018reinforcement}, multi-agent dynamic holding
~\citep{wang2020real}, asynchronous multi-agent coordination
~\citep{wang2021asynchronous}, and shared-corridor multi-line control
~\citep{wang2023multiobjective}. Asynchronous bus control is therefore not
itself a new contribution. Our study instead combines immutable
target replay, online interaction in a separate reduced-order simulator,
adaptive development of a fixed action transform, and final confirmation on
fresh target-simulator cells. The common passenger ledger and the paired final
reserve comparison distinguish this evaluation from a generic claim that a
new RL learner outperforms existing bus-holding algorithms.

The completion-safety reserve is intentionally simple. It is not a new generic
safe-RL algorithm and does not certify arbitrary operational constraints.
Instead, it encodes one observed transit requirement: when a lightly loaded
tail bus has no follower, a leader exists, and demand remains, suppressing hold
can improve accrued time while leaving passengers unfinished. The paired parent
comparison changes only the reserve, allowing this concrete mechanism to be
tested without reselecting a checkpoint or retraining the actor.

\section{Problem formulation}
\label{sec:problem}

We model multi-line bus holding using a finite-horizon SMDP representation
~\citep{sutton1999between}. The citation supplies the semi-Markov foundation for
variable-duration decisions; the serialization and passenger-time ledger below
are study-specific. The formulation
has three properties that are essential for
the study: decisions occur asynchronously at bus-stop events, simultaneous
decisions are serialized without duplicating system cost, and the training
reward is exact generalized passenger time rather than a headway proxy.

\subsection{Chronological decision process}
\label{sec:problem:smdp}

One shared policy controls the bus fleet. Let $\mathcal{L}$ be the set of
directional services and $\ell\in\mathcal{L}$ a service index. Physical time
advances continuously, but control is
requested only when a bus reaches a non-terminal stop. At physical decision
time $t_m$, all controllable arrivals form an ordered batch
$\{e_{m,1},\ldots,e_{m,n_m}\}$, where $m$ indexes physical batches, $n_m$ is
the batch size, and $e_{m,j}$ is the event in position $j$. Events are ordered
deterministically by service and vehicle identifier. The registered schema
supports $1\leq n_m\leq16$.

Simultaneous events are converted into an agent-environment cycle (AEC). Each
decision token is uniquely identified by a pair $(m,j)$, and $k$ enumerates
these pairs in chronological AEC order. Every token observes the entire fixed
batch and the actions already selected in that batch; the environment advances
only after the final action is available. Let
$x^{\mathrm{raw}}_{m,j}\in\R^{16}$ be the 16-feature raw SUMO event observation
for $e_{m,j}$, where $\R$ denotes the real numbers. Removing physical time and
the two system-wide passenger counts gives the 13-feature deduplicated local
event vector $x_{m,j}\in\R^{13}$. Let $N_{\mathrm{w}}(t_m)$ and
$N_{\mathrm{v}}(t_m)$ be the system-wide waiting and in-vehicle passenger
counts at batch time $t_m$. The 245-dimensional raw packed state and the
229-dimensional canonical state are
\begin{equation}
\begin{aligned}
s^{\mathrm{raw}}_k
&=\operatorname{pack}\!\left(
t_m,n_m,j,N_{\mathrm{w}}(t_m),N_{\mathrm{v}}(t_m),
\right.\\[-0.25em]
&\hspace{5.2em}\left.
x_{m,1},\ldots,x_{m,n_m},\text{prior actions}
\right)\in\R^{245},\\
s_k&=\operatorname{normalize}(s^{\mathrm{raw}}_k)\in\R^{229}.
\end{aligned}
\label{eq:v7_state}
\end{equation}
Here $x^{\mathrm{raw}}_{m,j}$ contains the service and stop indices, physical
time, forward and backward headways and their scheduled targets, eligible
stranded and on-board passenger counts, eligible passenger-arrival rate, base
dwell time, forward/backward vehicle-presence flags, vehicle capacity, and the
two system-wide passenger counts. The packing operation stores the five-value
raw batch prefix once, records active slots, inserts the exact actions for
earlier slots, and retains the no-hold sentinel for the current and future
slots. The normalization operation converts each 13-feature local event vector
to 12 dimensionless local features, using capacity to scale passenger counts
and demand instead of retaining it as a separate coordinate; together with the active flag and prior
action, each canonical slot has width 14. Inactive slots are masked.
The raw width is $5+16(1+13+1)=245$, and the canonical width is
$5+16(1+12+1)=229$. Here \(\operatorname{pack}\) denotes this fixed padding
and concatenation, and \(\operatorname{normalize}\) denotes the registered
feature scaling and local feature reduction.
This representation is centralized; its sufficiency as a Markov state is a
modeling assumption, discussed in \Cref{sec:discussion:limitations}.

\paragraph{Action.}
The policy emits one scalar $a_k\in[-1,1]$. Its physical hold duration is
\begin{equation}
\tau_k=\frac{a_k+1}{2}\,\tau_{\max},
\qquad \tau_{\max}=60\ \mathrm{s},
\label{eq:v7_action}
\end{equation}
where $\tau_k$ is the hold applied at token $k$ and $\tau_{\max}$ is the
operational cap. The exact action $a_k=-1$ means zero hold; all actions in
$(-1,1]$ mean strictly positive holds. Thus one mathematical action has one
physical meaning, rather than using separate gate and duration coordinates.

\paragraph{Duration and continuation.}
Let $\Delta t_k\geq0$ be physical time elapsed from token $k$ to token $k+1$,
and let $z_k\in\{0,1\}$ indicate that the transition is terminal. The stored
continuation factor is
\begin{equation}
\Gamma_k=1-z_k.
\label{eq:v7_discount}
\end{equation}
The horizon is $T_{\mathrm{hor}}=24{,}000$ s and the objective is
undiscounted. Consequently, $\Gamma_k=1$ for every non-terminal transition and
$\Gamma_k=0$ only at the absorbing endpoint. For two consecutive tokens inside
one simultaneous batch, $\Delta t_k=0$; the final token of the batch carries
the complete positive duration to the next physical batch.

\subsection{Exact generalized passenger-time reward}
\label{sec:problem:reward}

Let $N_{\mathrm{w}}(t)$ and $N_{\mathrm{v}}(t)$ denote the numbers of waiting
and in-vehicle passengers at physical time $t$, respectively. For token $k$ in
physical batch $m$, generalized passenger-time cost is
\begin{equation}
C_k=
\omega_{\mathrm{w}}\int_{t_m}^{t_m+\Delta t_k}N_{\mathrm{w}}(u)\,\mathrm{d}u
+
\omega_{\mathrm{v}}\int_{t_m}^{t_m+\Delta t_k}N_{\mathrm{v}}(u)\,\mathrm{d}u,
\qquad
(\omega_{\mathrm{w}},\omega_{\mathrm{v}})=(2,1),
\label{eq:v7_cost}
\end{equation}
where $u$ is the integration variable, $C_k$ is measured in generalized
passenger-seconds, and
$\omega_{\mathrm{w}}$ and $\omega_{\mathrm{v}}$ are the fixed waiting- and
in-vehicle-time values. The stored dataset reward is
\begin{equation}
r_k=-C_k.
\label{eq:v7_reward}
\end{equation}
Thus $r_k$ is stored in physical generalized passenger-seconds; numerical
conditioning applied by a trainer is not part of the data contract.
\Cref{eq:v7_cost,eq:v7_reward} define one reward for every token. Because the ledger
integrates synchronized system passenger counts, the chronological sum of
$C_k$ is non-overlapping. Intermediate AEC tokens have zero duration and hence
zero reward; system cost appears exactly once between physical batches.
\Cref{prop:ledger} in \Cref{app:properties:ledger} proves this accounting
identity, including the policy-invariant initial interval.

The primary evaluation outcome is full-horizon generalized passenger time per
departed passenger. Using all departed passengers avoids conditioning the
denominator on policy-dependent trip completion. If $N_{\mathrm{dep}}$ and
$N_{\mathrm{cmp}}$ are the numbers of departed and completed passengers in an
episode, define the primary endpoint $Y$, completion rate $R_{\mathrm{cmp}}$,
and unfinished count $U$ by
\begin{equation}
\begin{aligned}
Y&=\frac{\omega_{\mathrm{w}}\int_0^{T_{\mathrm{hor}}}N_{\mathrm{w}}(u)\,\mathrm{d}u
       +\omega_{\mathrm{v}}\int_0^{T_{\mathrm{hor}}}N_{\mathrm{v}}(u)\,\mathrm{d}u}
      {N_{\mathrm{dep}}},\\
R_{\mathrm{cmp}}&=\frac{N_{\mathrm{cmp}}}{N_{\mathrm{dep}}},
\qquad U=N_{\mathrm{dep}}-N_{\mathrm{cmp}}.
\end{aligned}
\label{eq:completion_rate}
\end{equation}
Unfinished passengers remain in the waiting or in-vehicle ledger at the
horizon, so their accrued time is included in the primary endpoint even though
their journeys are incomplete. The primary endpoint also includes the
policy-invariant interval before the first controllable event. That interval is
excluded from the training return because no action can affect it.

\subsection{Target and online-simulator environments}
\label{sec:problem:h2o}

The target evaluation environment $\mathcal{M}_{\mathrm{tgt}}$ is a
high-fidelity SUMO 1.25.0 microsimulation~\citep{lopez2018microscopic}.
The offline replay was collected in its SUMO 1.27.1 counterpart,
$\mathcal{M}_{\mathrm{log}}$, whose transition-duration kernel is denoted
$P_{\mathrm{log}}$. Both SUMO environments use the registered transit
scenario contracts; \Cref{sec:exp:validity} reports the runtime distinction.
The online simulator $\mathcal{M}_{\mathrm{sim}}$ is a calibrated, event-driven
model with the same service, stop, timetable, capacity, passenger journey,
state, action, and reward contracts. It executes transfers causally:
only a passenger's first ride leg is initially available, and each subsequent
leg is released only after the preceding leg actually alights. The simulator
uses calibrated segment runtimes and dwell/boarding mechanics instead of a
microscopic road model.

Their shared observation and action contracts allow us to express the
different dynamics through conditional transition-duration kernels:
\begin{equation}
P_{\mathrm{tgt}}(s',\Delta t\mid s,a)
\quad\text{and}\quad
P_{\mathrm{sim}}(s',\Delta t\mid s,a),
\label{eq:v7_kernels}
\end{equation}
where $s$ and $a$ are a current state and action, and $(s',\Delta t)$ is the
next-state/duration pair; a prime always denotes a successor state.
The two kernels in \Cref{eq:v7_kernels} are denoted
$P_{\mathrm{tgt}}$ and $P_{\mathrm{sim}}$, respectively.
The simulator was accepted only after a preregistered 30-cell fidelity audit
against 30 matched SUMO cells; the experiment section reports the audited
quantities and thresholds.

\begin{figure*}[tp]
\centering
\includegraphics[width=0.70\linewidth]{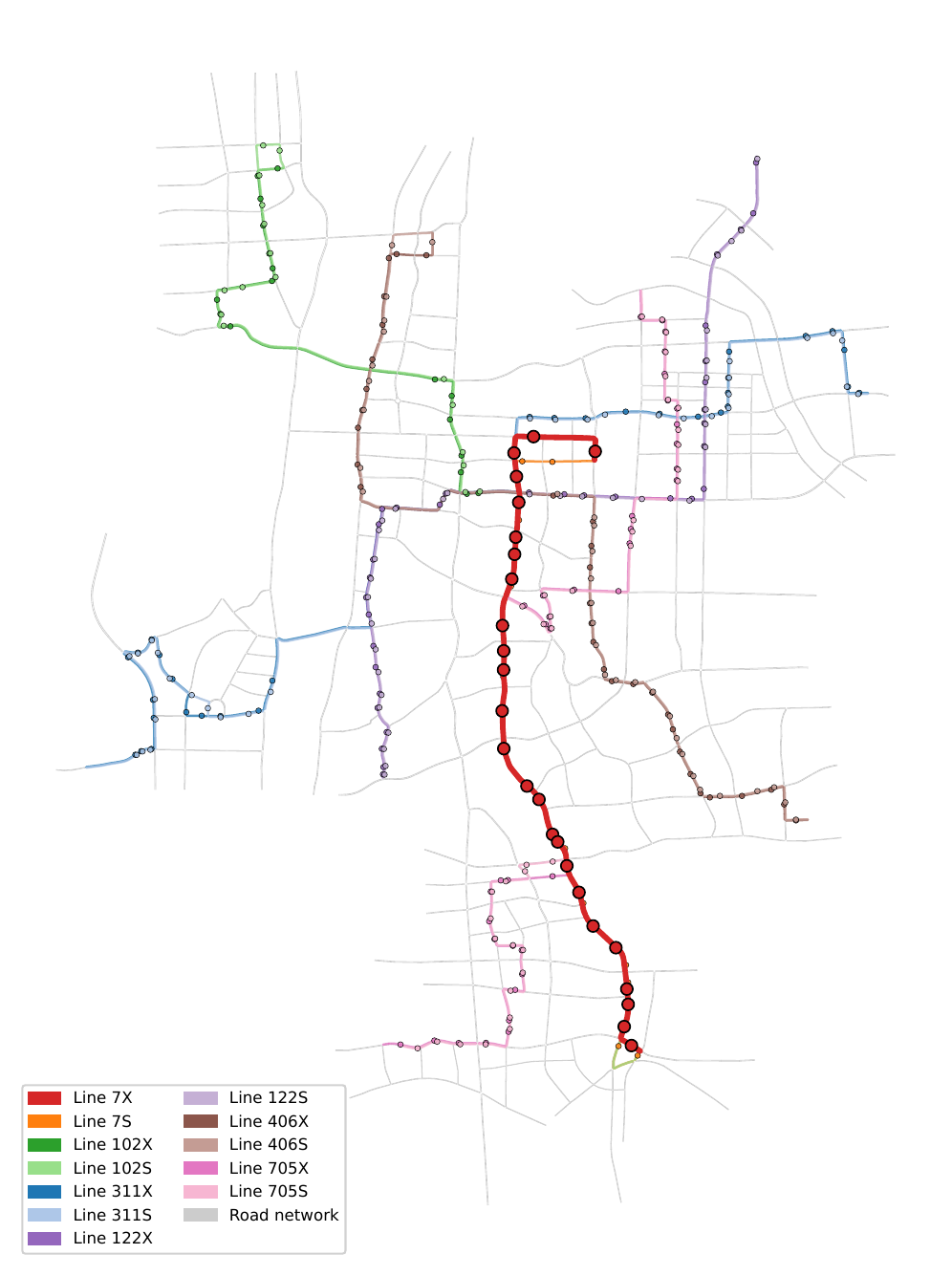}
\caption{Topology of the twelve directional bus services represented in both
the target SUMO environment $\mathcal{M}_{\mathrm{tgt}}$ and the calibrated
online simulator $\mathcal{M}_{\mathrm{sim}}$. The two environments share stop
sequences, timetables, capacities, and passenger origin-destination journeys;
their traffic and event-propagation mechanisms differ. Line 7X is drawn more
heavily only for legibility; policy control covers all 12 services.}
\label{fig:network_topology}
\end{figure*}

\subsection{Offline-to-online learning objective}
\label{sec:problem:objective}

The immutable target replay is
\begin{equation}
\mathcal{D}_{\mathrm{off}}=
\left\{(s_{\iota},a_{\iota},r_{\iota},s'_{\iota},
\Gamma_{\iota},\Delta t_{\iota},z_{\iota})\right\}_{\iota=1}^{N_{\mathrm{off}}},
\qquad N_{\mathrm{off}}=3{,}284{,}718,
\label{eq:v7_dataset}
\end{equation}
In \Cref{eq:v7_dataset}, $\iota$ indexes an offline row,
$N_{\mathrm{off}}$ is the number of rows, and every tuple component follows
\Cref{eq:v7_state,eq:v7_action,eq:v7_discount,eq:v7_reward}. The data comprise
240 complete 24,000-s SUMO episodes: eight behavior policies crossed with 30
registered environment cells formed by ten independent seed blocks and three
demand scales. The behavior policies are zero holding, two random hurdle policies,
four schedule-adjusted headway-balance policies, and a closed-form myopic
welfare policy. They are used for coverage, not as claimed experts.

During learning, interaction is permitted only in
$\mathcal{M}_{\mathrm{sim}}$ and produces simulator replay
$\mathcal{D}_{\mathrm{sim}}$. No target-environment transition is queried
online. Given $\mathcal{D}_{\mathrm{off}}$ and a fixed simulator decision-event
budget, the H2O RL objective is
\begin{equation}
\max_{\pi}\;J_{\mathrm{tgt}}(\pi),
\qquad
J_{\mathrm{tgt}}(\pi)=
\E_{\mathcal{M}_{\mathrm{tgt}},\pi}
\left[\sum_{k=0}^{K-1}r_k\right],
\label{eq:v7_objective}
\end{equation}
where $J_{\mathrm{tgt}}(\pi)$ is the expected target-environment return,
$\E_{\mathcal{M}_{\mathrm{tgt}},\pi}$ takes expectation over the target kernel
and policy, $\pi$ is any admissible policy on the action space $[-1,1]$,
and $K$ is the number of
chronological decision tokens before the horizon. The objective in
\Cref{eq:v7_objective} is evaluated only after training. The learned policy is
$\pi_{\phi}$, with parameter vector $\phi$.
Within a fixed exogenous cell, \Cref{prop:ledger} implies that maximizing the
return and minimizing $Y$ give the same policy ordering. Across cells with
different passenger counts, unnormalized training returns and normalized
evaluation means need not assign the same relative weights to demand levels.
Importantly,
$R_{\mathrm{cmp}}$ in \Cref{eq:completion_rate} is not part of the training
reward; it is a separate service constraint at evaluation. This distinction
allows a policy to reduce accrued generalized passenger time while leaving more
journeys unfinished, and motivates the completion-aware transform in
\Cref{sec:method:reserve}. The setup otherwise matches the H2O setting
introduced by \citet{niu2022h2o} and studied by H2O+~\citep{niu2025h2oplus},
while keeping all evidence explicitly simulator-to-simulator.

\section{Method}
\label{sec:method}

The final controller has two components: a frozen transit-adapted H2O+ actor
and a deterministic physical-action transform. Learning supplies a continuous
state-dependent holding proposal; the transform calibrates its magnitude and
applies a bounded service-completion reserve. No parameter is updated during
target-SUMO evaluation.

\subsection{Structured hurdle actor}
\label{sec:method:architecture}

The actor, critics, and value function use separate structured state encoders
with the same topology. A shared slot network encodes each of the 16 possible
event slots. Five pooled views---the current event, active-slot mean,
active-slot maximum, previously selected-slot mean, and future-slot
mean---are concatenated with the five-value batch prefix and mapped to a
256-dimensional representation. This preserves the ordered action prefix of a
simultaneous batch while masking inactive padding.

Because exact zero hold is frequent and operationally distinct from a small
positive duration, the actor is a hurdle distribution. Let
$p^+_{\phi}(s)=\sigmoid(g_{\phi}(s))$ be the positive-hold
probability, where $g_{\phi}(s)$ is the gate logit and
$\sigmoid(\zeta)=1/(1+\exp(-\zeta))$. Conditional on holding, the latent variable
$\zeta\sim\mathcal{N}(\mu_{\phi}(s),\sigma_{\phi}^2(s))$ follows a normal
distribution with mean $\mu_{\phi}(s)$ and positive standard deviation
$\sigma_{\phi}(s)$. Its positive-branch action is
$a^+=2\sigmoid(\zeta)-1\in(-1,1)$. The policy measure is
\begin{equation}
\pi_{\phi}(\mathrm{d}a\mid s)=
\left(1-p^+_{\phi}(s)\right)\delta_{-1}(\mathrm{d}a)
+p^+_{\phi}(s)f_{\phi}(a\mid s)
\mathbb{I}\{a\in(-1,1)\}\,\mathrm{d}a,
\label{eq:hurdle_policy}
\end{equation}
where $\delta_{-1}$ is the unit point mass at the no-hold action,
$f_{\phi}(a\mid s)$ is the conditional logistic-normal action density,
$\mathbb{I}\{\cdot\}$ is one when its condition holds and zero otherwise,
and $\mathrm{d}a$ denotes the continuous-action measure. For likelihoods and
entropy, we use the density $\varpi_{\phi}(a\mid s)$ with respect to the common
point-mass-plus-Lebesgue measure: it is $1-p^+_{\phi}(s)$ at $a=-1$ and
$p^+_{\phi}(s)f_{\phi}(a\mid s)$ for $-1<a<1$.
\Cref{prop:hurdle} in \Cref{app:properties:hurdle} derives the density and
normalization, and shows that $2\sigmoid(\mu_{\phi}(s))-1$ is the positive
branch's median, not generally its mean.

Deterministic evaluation uses that median when $p^+_{\phi}(s)\geq0.5$ and
otherwise returns $a=-1$. Physical holds are obtained through
\Cref{eq:v7_action} and remain in $[0,60]$ s. The implementation keeps the
continuous branch strictly inside $(-1,1)$ by clipping at floating-point
precision; the operational transform can still apply the exact 60-s cap.

The critic has two independently parameterized heads, with two target heads
updated by Polyak averaging at rate 0.005. The actor, critic, and value heads
each use two 256-unit hidden layers after the structured encoder. Further
implementation values are given in \Cref{app:architecture,app:training}.

\subsection{Offline initialization and simulator continuation}
\label{sec:method:training}

Five actors, with training seeds 6201--6205, were initialized by 50,000 IQL
updates on the immutable target replay~\citep{kostrikov2022iql}. For target
critic minimum $\bar Q_{\min}(s,a)=\min_{i\in\{1,2\}}Q_{\bar\theta_i}(s,a)$
and value function $V_{\nu}(s)$, the advantage
is $A(s,a)=\bar Q_{\min}(s,a)-V_{\nu}(s)$. Here $i$ indexes the two critic
heads, $Q_{\theta_i}$ and $Q_{\bar\theta_i}$ are the online and target
state--action value functions with parameter vectors $\theta_i$ and
$\bar\theta_i$, and $\nu$ is the value-network parameter vector.
All value functions use rewards
scaled by $\kappa_r=10^{-4}$ inside the trainer, not physical passenger-seconds.
The value uses expectile 0.7, and the
actor uses the advantage weight
\begin{equation}
w_{\mathrm{IQL}}(s,a)=
\min\left\{\exp\left(3A(s,a)\right),100\right\}.
\label{eq:iql_weight}
\end{equation}
Here $w_{\mathrm{IQL}}$ is the capped regression weight. The mixed-measure
log likelihood $\log\varpi_{\phi}(a\mid s)$ from \Cref{eq:hurdle_policy} is
used for advantage-weighted actor regression.

After a 5,000-event frozen-policy warmup, training continued in the calibrated
online simulator with 50/50 target and simulator critic minibatches and a
SAC-style actor objective~\citep{haarnoja2018soft,haarnoja2018applications}.
Define the entropy-regularized target-state bootstrap by
\begin{equation}
B_{\mathrm{SAC}}(s')=
\E_{a\sim\pi_{\phi}(\cdot\mid s')}
\left[\bar Q_{\min}(s',a)-\eta_{\mathrm{ent}}
\log\varpi_{\phi}(a\mid s')\right],
\label{eq:sac_bootstrap}
\end{equation}
where $\eta_{\mathrm{ent}}>0$ is the learned entropy temperature and the
expectation is over the hurdle policy. The discrete gate is summed exactly and
the continuous branch uses one reparameterized sample per state. The online Bellman bootstrap
mixed the fixed IQL value and entropy-regularized SAC bootstrap as
\begin{equation}
B_{\mathrm{mix}}(s')=0.1V_{\nu}(s')+0.9B_{\mathrm{SAC}}(s').
\label{eq:mixed_backup}
\end{equation}
Here $B_{\mathrm{mix}}$ is the mixed target-state value and $V_{\nu}$ is
fixed after IQL initialization. The critic target is
\begin{equation}
y_k=\kappa_r r_k+\Gamma_k B_{\mathrm{mix}}(s_{k+1}),
\label{eq:critic_target}
\end{equation}
where $y_k$ denotes a training target, distinct from the physical evaluation
endpoint $Y$. \Cref{eq:critic_target} uses no additional duration discount.
Critics minimize a Huber loss with threshold 1.0 against this target.

The actor and entropy temperature remain frozen through 10,000 simulator
events while critics adapt after warmup. On release, actor and temperature
updates use simulator states only. The actor learning-rate multiplier ramps
from approximately 0.1 to 1 over events 10,000--15,000; the temperature resumes
at its full registered rate. Each subsequent actor update is followed by
parameter interpolation toward the seed-matched 10,000-event actor, with a
fixed interpolation coefficient $5\times10^{-5}$.
The final branches inherited an exact seed-matched optimizer and replay state
at 10,000 simulator events and continued to 25,000 events. From event 15,000
through the final actor update before event 25,000, the preregistered
terminal-quarter branch linearly tapered the actor learning-rate multiplier
from 1 to 0.25 while leaving the remaining update schedule unchanged.
Each seed completed 70,000 learner updates in total. Only the 25,000-event
endpoint was exported for the final policy; checkpoints were not selected by
target-SUMO behavior.

\subsection{Dynamics-ratio diagnostic}
\label{sec:method:ratio}

The implementation retains the H2O+ domain-classifier construction for
diagnosing transition mismatch~\citep{eysenbach2021off,niu2025h2oplus}. A
joint classifier uses $(s,a,s',\Delta t/T_{\mathrm{hor}})$ and a marginal
classifier uses $(s,a)$. With balanced target and simulator classes, their
logit difference estimates the logged-target-to-simulator ratio
\begin{equation}
F_{\psi_J}(s,a,s',\Delta t/T_{\mathrm{hor}})
-G_{\psi_M}(s,a)
\approx
\log\frac{P_{\mathrm{log}}(s',\Delta t\mid s,a)}
          {P_{\mathrm{sim}}(s',\Delta t\mid s,a)}.
\label{eq:ratio_diagnostic}
\end{equation}
Here $F_{\psi_J}$ and $G_{\psi_M}$ are the joint and marginal target-class
logits, with parameter vectors $\psi_J$ and $\psi_M$, respectively; the target
class has label one and the simulator class label zero. The ratio concerns
conditional densities under a common reference measure, and its estimation
requires generalization to the states being evaluated. Because the positive
examples come from $\mathcal{M}_{\mathrm{log}}$, this estimates the logged
runtime's dynamics rather than directly estimating the formal-evaluation
kernel $P_{\mathrm{tgt}}$.
Whole trajectories, not randomly interleaved rows, define held-out validation.
The five final training runs used unit simulator weights in the critic; the
classifier output in \Cref{eq:ratio_diagnostic} remained diagnostic and never
changed a Bellman residual. This matters for interpretation: the formal effect
cannot be attributed to learned density-ratio weighting.

\subsection{Physical action calibration}
\label{sec:method:reserve}

Let $h_{\phi}(s)\in[0,60]$ be the deterministic actor proposal in seconds,
obtained from \Cref{eq:v7_action} before any operational transform, and let
$\rho=t/T_{\mathrm{hor}}$ be the horizon fraction at physical time $t$.
The first transform reduces
aggressive learned holds while retaining a transparent headway response. Let
$H_f,H_b$ be forward and backward headways, $H_f^*,H_b^*$ their targets, and
$I_f,I_b\in\{0,1\}$ indicate whether the corresponding buses are present. The
gain-one headway reserve is
\begin{equation}
h_{\mathrm{hb}}(s)=
\operatorname{clip}\left(
\tfrac{1}{2}\left[I_b(H_b-H_b^*)-I_f(H_f-H_f^*)\right],0,60
\right).
\label{eq:headway_reserve}
\end{equation}
Here $h_{\mathrm{hb}}$ is a headway-based hold in seconds, and
$\clip(\cdot,0,60)$ truncates its scalar argument to
the operational range. Headways and their targets are also measured in seconds.
The calibrated learned action combines the headway response in
\Cref{eq:headway_reserve} with the actor proposal:
\begin{equation}
h_{16}(s)=
\begin{cases}
\min\{h_{\phi}(s),\max[0.125h_{\phi}(s),h_{\mathrm{hb}}(s)]\},
& \rho<0.75\ \text{and}\ h_{\phi}(s)>0,\\
0.125h_{\phi}(s), & \text{otherwise}.
\end{cases}
\label{eq:v16_transform}
\end{equation}
Here $h_{16}$ denotes the inherited v16 calibrated holding proposal.

The completion guard is defined from physical state, not critic value. Let
$n_{\mathrm{on}}$ be onboard passengers, $c$ vehicle capacity, and
$\lambda_{\mathrm{arr}}$ the eligible passenger-arrival rate in passengers
per second. For horizon-fraction endpoints
$0\leq\rho_{\mathrm{lo}}<\rho_{\mathrm{hi}}\leq1$,
\begin{equation}
\mathcal{G}_{[\rho_{\mathrm{lo}},\rho_{\mathrm{hi}})}(s)=
\mathbb{I}\left\{
\rho_{\mathrm{lo}}\leq\rho<\rho_{\mathrm{hi}},\ I_f=1,\ I_b=0,\lambda_{\mathrm{arr}}>0,
\ c>0,\frac{n_{\mathrm{on}}}{c}\leq0.25
\right\}.
\label{eq:completion_guard}
\end{equation}
Thus the binary guard in \Cref{eq:completion_guard} identifies a low-load controlled bus with a vehicle ahead, no
vehicle behind, and continuing eligible demand.

Using the proposal in \Cref{eq:v16_transform}, the direct parent first applies an inherited 20-s tail-service floor only in
the later part of the reserve window:
\begin{equation}
h_{\mathrm{par}}(s)=
\begin{cases}
\max\{h_{16}(s),20\}, & \mathcal{G}_{[0.50,0.75)}(s)=1,\\
h_{16}(s), & \text{otherwise}.
\end{cases}
\label{eq:direct_parent}
\end{equation}
Here $h_{\mathrm{par}}$ is the direct parent's physical hold. The
reserve-augmented candidate changes only the final line of the deployed
behavior:
\begin{equation}
h_{\mathrm{safe}}(s)=
\begin{cases}
\max\{h_{\mathrm{par}}(s),60\}, &
\mathcal{G}_{[0.15,0.75)}(s)=1,\\
h_{\mathrm{par}}(s), & \text{otherwise}.
\end{cases}
\label{eq:completion_reserve}
\end{equation}
Here $h_{\mathrm{safe}}$ is the candidate's physical hold; the subscript names
the registered completion-safety variant. Because 60 s is the physical cap, an active reserve sets the hold to exactly
60 s. The candidate and direct parent share the same actor tensor, training
seed, checkpoint, state, and inherited transforms. They differ only through
\Cref{eq:completion_reserve}. The executed duration $\tau_k$ is
$h_{\mathrm{safe}}(s_k)$ for the candidate and $h_{\mathrm{par}}(s_k)$ for
its parent; the corresponding normalized action is obtained by inverting
\Cref{eq:v7_action}.

\subsection{Development and confirmation separation}
\label{sec:method:separation}

The transform was not derived from the formal outcomes. Successive
preregistered development rounds from v14 through v20 used prior development
results to diagnose actor drift, hold magnitude, headway reserve, and
completion. Several rounds reused the same nine development cells, so these
results are adaptive design evidence rather than confirmation. The 60-s
quarter-load candidate, its direct parent, all actor tensors, formal
environment blocks, primary endpoint, completion margin, and exact sign tests
were frozen before formal outcome access. The formal stage introduced no new
training run and permitted no candidate replacement.

\section{Experiments and Results}
\label{sec:experiments}

The experiment has three evidence layers: offline data and simulator
acceptance, adaptive development, and formal confirmation. Only the last layer
is used for the confirmatory effect claim.

\subsection{Target benchmark and offline data}
\label{sec:exp:benchmark}

The target benchmark uses SUMO on the 12 directional services in
\Cref{fig:network_topology}. Each 24,000-s episode contains the same timetable,
vehicle capacities, road network, and 389 scheduled vehicle trips. Passenger
demand is evaluated at multipliers 0.75, 1.00, and 1.25. Within an environment
block, the three demand levels share one demand-realization seed and one SUMO
seed; departure times receive stable-uniform perturbations of at most 120 s.

The immutable offline file contains 3,284,718 chronological rows from 240
complete episodes. Ten independent environment blocks crossed with three
demand levels and eight behavior policies give 30 exogenous cells and 240
trajectories. Behaviors are zero hold, two random hurdle policies, four
fixed-gain schedule-adjusted headway policies, and a closed-form myopic-welfare
policy. They provide action coverage and are not labelled experts. Data
collection used SUMO 1.27.1 and Python 3.10.12.

The calibrated online simulator shares the timetable, capacity, journey,
state, action, and reward contracts. In a preregistered zero-policy audit over
30 audited cells, every 95\% bootstrap interval stayed within its threshold.
The registered relative tolerances were 3\% for passenger demand, 5\% for
decision count, 15\% for generalized and waiting time, 10\% for in-vehicle
time, and 5\% for completed vehicle-trip duration; completion used an absolute
0.03 tolerance.
The largest point gaps were 4.61\% for generalized passenger time, 8.75\% for
waiting time, 1.66\% for in-vehicle time, 0.43 percentage points for
completion, and 1.04\% for completed vehicle-trip duration. A second audit
crossed the same cells with seven nonzero interventions. The largest absolute
95\% endpoint was 0.00574 for completion, 0.0390 for normalized generalized
passenger-time response, 0.0498 for waiting response, 0.0200 for in-vehicle
response, 0.2078 for relative mean hold, and 0.0117 for vehicle-trip duration;
all were below their preregistered limits. Intervention-response tolerances
were 0.03 for absolute completion response, 0.10 for normalized generalized,
waiting, and in-vehicle responses, and 0.05 for decision and trip-duration
responses. Policy-level mean hold and exact-zero-action rate used relative
0.35 and absolute 0.10 limits, respectively. Both audits used 10,000 bootstrap
resamples of episode-level quantities within each demand level, resampling
SUMO and simulator samples separately. Intervention outcomes were first
differenced against the same environment's zero-hold outcome; time and count
responses were normalized by the corresponding SUMO zero-hold mean. These
acceptance audits precede the independent formal test below.

\subsection{Formal confirmation matrix}
\label{sec:exp:formal_design}

Formal evaluation was endpoint-only: no area under a learning curve (AULC) was
computed or claimed. The sole candidate was the reserve-augmented policy in
\Cref{eq:completion_reserve}. Its direct behavior parent was
\Cref{eq:direct_parent} with the final 60-s reserve disabled. Five
seed-matched actor tensors were fixed before evaluation.

Seven supportive controls were also frozen: the raw v14 terminal-quarter
actor, the base H2O+ endpoint, online-only SAC, delayed-actor v10, actor-hold
v11, actor-ramp v12, and proximal-strong v13. Together with the candidate and
direct parent, nine algorithm labels crossed with five training seeds to form
45 physical behaviors. These controls provide lineage context; they do not
expand the confirmatory claim beyond candidate versus direct parent.

Ten fresh formal environment blocks used SUMO seeds 8101--8110 and demand
realization seeds 8201--8210. None appeared in training or adaptive
development. Crossing 10 blocks, three demand levels, and 45 behaviors produced
1,350 new SUMO episodes and 30 fresh environment cells. Formal execution used
SUMO/libsumo 1.25.0 under Python 3.8.20. All policies in a cell shared the same
exogenous realization.

\subsection{Endpoints and preregistered inference}
\label{sec:exp:statistics}

The primary endpoint is $Y$, accrued generalized passenger seconds per departed
passenger; lower is better. The service endpoint is $R_{\mathrm{cmp}}$, and
unfinished passenger count $U$ is reported descriptively, all as defined in
\Cref{eq:completion_rate}. The independent analysis unit is the formal environment block.
For each candidate--parent comparison, the five training seeds and three demand
levels yield 15 paired observations per block; these are averaged before any
sign test. Let $\beta\in\{1,\ldots,10\}$ index formal blocks,
$\xi\in\{1,\ldots,5\}$ index training seeds, and
$d\in\mathcal{D}_{\mathrm{dem}}=\{0.75,1.00,1.25\}$ denote the demand
multiplier. Superscripts $\mathrm{safe}$ and $\mathrm{par}$ identify the
candidate and parent, respectively. Their primary block difference is
\begin{equation}
\Delta Y_{\beta}=\frac{1}{15}
\sum_{\xi=1}^{5}\sum_{d\in\mathcal{D}_{\mathrm{dem}}}
\left(Y^{\mathrm{safe}}_{\beta,\xi,d}
-Y^{\mathrm{par}}_{\beta,\xi,d}\right).
\label{eq:block_difference}
\end{equation}
The same averaging in \Cref{eq:block_difference} defines
$\Delta R_{\mathrm{cmp},\beta}$ and $\Delta U_{\beta}$. In the tables,
$\Delta Y$, $\Delta R_{\mathrm{cmp}}$, and $\Delta U$ denote mean paired
differences over all cells or the stated demand or seed subset.

Primary superiority uses a one-sided exact paired sign test. Completion uses
the same test after shifting each block difference by the noninferiority margin
$\varepsilon_{\mathrm{NI}}=0.003$: a block is a win when
$\Delta R_{\mathrm{cmp},\beta}>-\varepsilon_{\mathrm{NI}}$.
A primary win requires $\Delta Y_{\beta}<0$. Ties are nonwins. For a win
count $w\in\{0,\ldots,10\}$, the one-sided sign-test probability is
\begin{equation}
p_{\mathrm{sign}}(w)=2^{-10}\sum_{\upsilon=w}^{10}
\binom{10}{\upsilon},
\label{eq:sign_probability}
\end{equation}
where $\upsilon$ is the summation index for possible win counts and
$\binom{10}{\upsilon}$ is the binomial coefficient. At
one-sided significance level $\alpha_{\mathrm{sig}}=0.025$, at least 9 of
10 wins are required by \Cref{eq:sign_probability}
($p_{\mathrm{sign}}(9)=0.010742$). The confirmatory intersection--union test passes
only if both primary superiority and completion noninferiority pass. Mean
direction, demand-stratum direction, result completeness, and parent artifact
and training stability were additional preregistered gates.

\subsection{Confirmed candidate--parent effect}
\label{sec:exp:primary_results}

All 1,350 results and all ten 135-job block manifests passed independent
validation. \Cref{tab:formal_primary} reports the endpoint means. The candidate
reduced the primary endpoint by 92.12 s per departed passenger, a descriptive
relative reduction of 3.71\%. Completion increased by 0.00466 (0.466
percentage points), and unfinished passengers decreased by 77.22 (11.78\%).

\begin{table}[t]
\centering
\small
\caption{Formal endpoint means for the preregistered candidate and its direct
behavior parent. The difference is candidate minus parent.}
\label{tab:formal_primary}
\resizebox{\linewidth}{!}{%
\begin{tabular}{lrrr}
\toprule
Policy & Gen. passenger s/departed & Completion & Unfinished \\
\midrule
Direct parent & 2,484.86 & 0.96185 & 655.42 \\
Reserve-augmented candidate & \textbf{2,392.73} & \textbf{0.96650} & \textbf{578.20} \\
Difference & $-92.12$ & $+0.00466$ & $-77.22$ \\
\bottomrule
\end{tabular}
}
\end{table}

The primary difference was negative in all ten blocks, giving 10 of 10 wins
and a one-sided exact sign-test value of $p_{\mathrm{sign}}(10)=0.0009766$. Completion differences
were positive in every block, so completion noninferiority also achieved 10 of
10 wins and $p_{\mathrm{sign}}(10)=0.0009766$ after the margin shift. The intersection--union test
therefore passed. All preregistered artifact, parent-stability, mean,
demand-stratum, and result-completeness gates passed as well.

The effect was directionally consistent across demand levels
(\Cref{tab:demand_effect}). Its magnitude increased with demand: the primary
reduction ranged from 40.29 s at demand 0.75 to 148.46 s at demand 1.25, while
completion improved at every level. It was also directionally consistent for
each of the five training seeds: primary differences ranged from $-87.25$ to
$-97.44$ s and completion differences from $+0.00444$ to $+0.00485$
(\Cref{app:seed_effect,tab:seed_effect}).

\begin{table}[t]
\centering
\small
\caption{Candidate-minus-parent formal differences by demand multiplier.
Negative generalized time and unfinished counts are favorable; positive
completion is favorable.}
\label{tab:demand_effect}
\begin{tabular}{lrrr}
\toprule
Demand & $\Delta Y$ (s/departed) & $\Delta R_{\mathrm{cmp}}$ & $\Delta U$ (passengers) \\
\midrule
0.75 & $-40.29$ & $+0.00185$ & $-21.00$ \\
1.00 & $-87.61$ & $+0.00470$ & $-70.92$ \\
1.25 & $-148.46$ & $+0.00741$ & $-139.74$ \\
\bottomrule
\end{tabular}
\end{table}

\subsection{Supportive lineage controls}
\label{sec:exp:controls}

\Cref{tab:formal_controls} gives descriptive means for all fixed algorithm
labels. The reserve-augmented candidate had lower generalized passenger time and
higher completion than every listed control, including online-only SAC. Its
primary block difference versus each control was favorable in all ten blocks.
These comparisons were prespecified robustness gates, but the controls do not
share the candidate's exact behavior parent and are not part of the
intersection--union confirmatory claim.

\begin{table*}[t]
\centering
\small
\caption{Descriptive formal means for the fixed candidate, direct parent, raw
actor, and lineage controls. Bold denotes the best displayed mean.}
\label{tab:formal_controls}
\resizebox{\textwidth}{!}{%
\begin{tabular}{lrrr}
\toprule
Algorithm label & Gen. passenger s/departed & Completion & Unfinished passengers \\
\midrule
Reserve-augmented candidate & \textbf{2,392.73} & \textbf{0.96650} & \textbf{578.20} \\
Direct behavior parent & 2,484.86 & 0.96185 & 655.42 \\
Raw v14 terminal-quarter actor & 2,623.19 & 0.96453 & 610.03 \\
Base H2O+ & 2,696.98 & 0.96288 & 637.41 \\
Online-only SAC & 2,593.85 & 0.96232 & 648.31 \\
Delayed actor (v10) & 2,532.20 & 0.95894 & 702.57 \\
Actor hold (v11) & 2,569.68 & 0.96023 & 681.31 \\
Actor ramp (v12) & 2,585.56 & 0.96194 & 653.26 \\
Proximal strong (v13) & 2,581.26 & 0.96053 & 676.66 \\
\bottomrule
\end{tabular}
}
\end{table*}

\subsection{Data and environment diagnostics}
\label{sec:exp:validity}

The artifact checks found no nonfinite rows, no passenger-ledger reconstruction
error, and 240 terminal rows for 240 trajectories. Nevertheless, four factors
qualify algorithmic interpretation. \Cref{app:support} provides the support
counts and dataset-integrity details.

First, the 3.28 million serialized transitions arise from only 30 independent
exogenous cells, each reused by eight behavior policies. Statistical evidence
must therefore be organized by environment block rather than transition.
Second, 80.2246\% of offline actions are exact zero. The final low-load guard
appears in 3.91\% of offline rows, and only 3.69\% of those guard-state actions
are at least 30 s. The 60-s reserve should consequently be interpreted as a
bounded operational overlay, not a behavior learned from dense action support.

Third, the ratio diagnostic generalized poorly. Mean joint classifier area
under the receiver-operating-characteristic curve (AUC) was 0.9477 on training data
and 0.4520 on trajectory-held-out validation data; mean raw validation
effective sample size (ESS) fraction was 0.0615. \Cref{app:ratio,tab:ratio_diag}
reports the across-seed diagnostic summary. Unit simulator weights prevented
this failure from changing the sealed optimizer, but no dynamics-correction
effect is established. Fourth, offline collection used SUMO 1.27.1 whereas
formal evaluation used SUMO/libsumo 1.25.0. The fresh formal result therefore
remains favorable in the evaluation runtime, but a paired version-sensitivity
experiment was not conducted. Canonical feature clipping was
otherwise limited: only 1,565 rows (0.0476\%) exceeded the backward-headway
cap, and the other audited caps had zero exposure.

\subsection{Execution and reproducibility}
\label{sec:exp:reproducibility}

Each formal result binds the method identity, training seed, actor tensor,
checkpoint, target dataset, evaluator source, scenario, and runtime. Ten matrix
manifests account for all 1,350 files. A frozen analysis produced the reported
statistics, and a separate implementation independently recomputed the result
count, means, block wins, sign-test value, and all preregistered gates. The
closeout also verified that no local SUMO execution occurred during analysis
and that formal result files were not modified.

Only result JavaScript Object Notation (JSON) files, matrix manifests, compact logs, and verification
metadata were synchronized from the evaluation server. Checkpoints, datasets,
optimizer state, replay buffers, and remote workspaces were not duplicated in
the paper artifact. This preserves a minimal reproducibility package while the
frozen deployment manifest records the larger server-side dependencies.
\Cref{app:evidence} identifies the compact analysis and verification files.

\section{Discussion}
\label{sec:discussion}

\subsection{What the formal result establishes}

The formal comparison isolates one mechanism. Candidate and parent share the
same five actor tensors, checkpoints, state, and inherited action transform;
only the 60-s completion reserve differs. On ten fresh environment blocks, the
reserve reduced generalized passenger time and improved completion in every
block. The direction also held at every demand level and training seed. This is
strong evidence that the fixed reserve improves this simulator-to-simulator
endpoint relative to its direct parent under the registered scenario family.

The demand pattern is operationally coherent. The benefit is smallest at
demand 0.75 and largest at demand 1.25, where leaving a tail without a follower
has greater passenger consequences. The reserve can add local in-vehicle time,
but it may also prevent a low-load tail bus from advancing in a way that leaves
later demand unserved before the horizon. The formal decrease in unfinished
passengers is consistent with the latter mechanism in these scenarios.
This interpretation is mechanistic rather than causal at the level of
individual passengers because the experiment did not randomize guard
activations within a trajectory.

The result does not establish an active density-ratio benefit. Classifier
validation was near chance, and final training used unit simulator weights.
The learned component should therefore be described as an IQL-initialized,
simulator-continued H2O+ actor with mixed replay, not as a successfully
ratio-corrected policy. The confirmed contribution is the composition of this
fixed actor with the explicit completion-safety reserve.

\subsection{Why development evidence alone was insufficient}

The development history was adaptive. Successive rounds diagnosed actor drift,
hold scale, headway reserve, tail service, and completion, and several rounds
reused the same nine development cells. A favorable result at one stage could
therefore reflect both a real mechanism and accumulated adaptation to those
cells. Earlier variants also used different service margins in some analyses;
passing a looser completion boundary does not imply passing the final 0.003
margin. For these reasons, development improvements were not treated as a
formal effect.

The ten-block confirmation was designed to break that feedback loop. Candidate,
parent, environment cells, endpoints, margin, and tests were frozen before
outcome access. Formal cells were disjoint from training and all development
rounds, and candidate replacement was forbidden. This design, rather than the
number of development iterations, is what supports the bounded confirmed
claim.

\subsection{Limitations}
\label{sec:discussion:limitations}

\paragraph{Data and state representation.}

The offline dataset is large in rows but modest in independent exogenous
variation. Simultaneous decisions and long trajectories create millions of
correlated SMDP tokens from only 30 environment cells. The five training seeds
represent optimizer variation conditional on that fixed data, not five new
datasets. Future data collection should prioritize more independent days,
demand realizations, and disruption patterns instead of denser serialization of
the same cells. The centralized observation also compresses passenger
histories and future service availability. Its Markov sufficiency is a
modeling assumption, not a proved property of the 229-feature encoding.
\Cref{prop:ledger,prop:hurdle} establish exact accounting and action
normalization, not convergence or a completion guarantee.

Positive-action support is also sparse. Exact zero hold dominates replay, and
long holds are uncommon in the final guard states. The completion reserve is
therefore intentionally a transparent physical rule rather than a claim that
the offline actor learned reliable values for 60-s actions in these states. A
future learned completion controller would need explicit tail-state/action
coverage and a constrained or terminal-shortfall objective. Training also uses
total accrued cost, whereas evaluation normalizes each episode by departed
passengers. Their within-cell ordering agrees by \Cref{prop:ledger}, but
cross-demand weighting can differ.

\paragraph{Simulator and runtime scope.}
Training data were collected with SUMO 1.27.1, whereas development and formal
evaluation were locked to SUMO/libsumo 1.25.0. Because formal performance is
measured entirely in 1.25.0, the reported comparison itself is internally
paired. The shift nevertheless complicates attribution between algorithm,
simulator mismatch, and version-specific behavior. A version-sensitivity matrix
with the same fixed policies in both runtimes is needed before claiming
runtime-invariant transfer.

The reduced-order online simulator passed zero-policy and intervention-response
audits, but those tests do not cover every state distribution induced by a
learned policy. It also approximates transfer-service choice through calibrated
frequencies rather than a dynamic first-arrival model. Both environments share
the same timetable, demand source, capacities, and passenger contract, so the
study does not test changes in traveler preferences, fares, dispatch rules, or
fleet composition.

\paragraph{Statistical and comparison scope.}

The exact sign test does not assume normally distributed effect sizes; it
uses independent blocks and a null directional win probability no greater
than one half. It reports directional
consistency, not a confidence interval for a population-average effect. Ten
blocks also provide limited resolution: the smallest attainable one-sided
value is $2^{-10}$. More independent blocks would support effect-size
uncertainty and interaction analyses without relying on episode-level
pseudoreplication.

The lineage-control table is useful context but not a set of confirmatory
pairwise claims. Only the direct parent removes the final reserve while holding
the underlying actor and prior transform fixed. The study is also endpoint-only;
it cannot support a statement about learning speed, AULC, or compute efficiency.
Likewise, the formal result does not retroactively validate unsuccessful
development candidates or the ratio diagnostic.

\paragraph{Operational scope.}

The reserve is interpretable and bounded, but it is not a complete deployment
safety system. The 60-s cap does not encode terminal layovers, driver rules,
accessibility obligations, connection protection, dispatch authority, or
incident response. Before a field trial, the fixed policy should be calibrated
and evaluated on temporally held-out Automatic Vehicle Location, Automatic
Passenger Counting, and fare-collection records, followed by shadow-mode
testing with dispatcher review.

The current evidence ends at a provenance-locked simulator-to-simulator
confirmation. It supports writing and reporting the bounded result, but not a
claim of live passenger benefit. A field study would require separate
authorization, operational stopping rules, and independent evaluation days.

\section{Conclusion}
\label{sec:conclusion}

We studied cross-fidelity offline-to-online RL for asynchronous multi-line bus
holding under an exact chronological passenger-time contract. The final
controller combines a fixed IQL-initialized, simulator-continued H2O+ actor with
a deterministic completion-safety reserve. Relative to the identical direct
parent without the final reserve, the fixed candidate reduced generalized
passenger time by 92.12 s per departed passenger, increased completion by
0.00466, and reduced unfinished passengers by 77.22. These paired effects were
confirmed within a 1,350-episode fresh formal SUMO matrix. Primary superiority and completion noninferiority passed in all
ten independent blocks under the preregistered intersection--union test.

The completion-aware reserve
improves the fixed parent policy at the registered endpoint in the evaluated
SUMO scenario family. Future work should align
training and evaluation runtimes, collect more independent environment cells
with richer positive-hold support, incorporate completion into the learning
objective, and validate the fixed policy against held-out operational records.

\section*{Data and Code Availability}

The reproducibility package contains the frozen training and evaluation source,
scenario and method manifests, 1,350 formal result records, the registered
analysis, and an independent analysis and closeout verification. It excludes
duplicated checkpoints, optimizer states, replay buffers, and server
workspaces; these are bound through deployment manifests rather than copied
into the paper archive. The public repository address will be inserted after
double-anonymous review. The final sharing terms for operating data and derived
SUMO inputs require author and data-provider confirmation and will be inserted
before submission.

\section*{Declaration of Competing Interest}

A competing-interest declaration requires confirmation from all authors and
will be finalized before submission.

\section*{Funding}

Funding information and author-identifying acknowledgements are withheld during
double-anonymous review and require author confirmation before submission.

\section*{Declaration of Generative Artificial Intelligence (AI) and AI-Assisted Technologies in the Manuscript Preparation Process}

During preparation of this work, the authors used AI-assisted tools for
manuscript drafting, language editing, reference verification, LaTeX
consistency checks, and code and log inspection. The
authors reviewed and verified the resulting text and take full responsibility
for the article. No generative-AI-created figure, image, or artwork is included
in the manuscript.

\section*{Contributor Roles Taxonomy (CRediT) Authorship Contribution Statement}

Author contributions are withheld during double-anonymous review.

\clearpage
\appendix
\numberwithin{equation}{section}
\numberwithin{table}{section}
\numberwithin{figure}{section}
\renewcommand{\theequation}{\Alph{section}.\arabic{equation}}
\renewcommand{\thetable}{\Alph{section}.\arabic{table}}
\renewcommand{\thefigure}{\Alph{section}.\arabic{figure}}
\section{Accounting and action-distribution properties}
\label{app:properties}

\subsection{Chronological passenger-time accounting}
\label{app:properties:ledger}

The following identity justifies the reward accounting in
\Cref{sec:problem:reward} and its relationship to the reported endpoint.

\begin{proposition}[Nonduplicating AEC ledger]
\label{prop:ledger}
Let $t_0$ be the first controllable batch time, and suppose the positive-duration
batch transitions partition $[t_0,T_{\mathrm{hor}}]$. Assume the passenger-count
functions are integrable. With the serialization in \Cref{sec:problem:smdp}
and costs in \Cref{eq:v7_cost},
\begin{equation}
\sum_{k=0}^{K-1}C_k
=\omega_{\mathrm{w}}\int_{t_0}^{T_{\mathrm{hor}}}N_{\mathrm{w}}(u)\,\mathrm{d}u
+\omega_{\mathrm{v}}\int_{t_0}^{T_{\mathrm{hor}}}N_{\mathrm{v}}(u)\,\mathrm{d}u.
\label{eq:ledger_identity}
\end{equation}
Define the pre-control cost
\begin{equation}
C_{\mathrm{pre}}=
\omega_{\mathrm{w}}\int_0^{t_0}N_{\mathrm{w}}(u)\,\mathrm{d}u
+\omega_{\mathrm{v}}\int_0^{t_0}N_{\mathrm{v}}(u)\,\mathrm{d}u.
\label{eq:precontrol_cost}
\end{equation}
For $N_{\mathrm{dep}}>0$, the primary endpoint satisfies
\begin{equation}
Y=\frac{C_{\mathrm{pre}}+\sum_{k=0}^{K-1}C_k}{N_{\mathrm{dep}}}.
\label{eq:endpoint_ledger}
\end{equation}
For a fixed exogenous realization, $C_{\mathrm{pre}}$ and $N_{\mathrm{dep}}$
are policy-invariant under the registered demand contract.
\end{proposition}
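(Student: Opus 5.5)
The plan is to regroup the chronological token sum by physical batch and then apply additivity of the Lebesgue integral over the partition. Index the physical batches $m=0,1,\ldots,M-1$ with times $t_0<t_1<\cdots<t_{M-1}$, and set $t_M=T_{\mathrm{hor}}$ (the absorbing endpoint). By the serialization in \Cref{sec:problem:smdp}, batch $m$ contributes tokens $(m,1),\ldots,(m,n_m)$, and the chronological index $k$ enumerates these pairs bijectively. So $\sum_{k=0}^{K-1}C_k=\sum_{m}\sum_{j=1}^{n_m}C_{(m,j)}$ is only a reordering of a finite sum.

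\textbf{Step 1: collapse each batch.} For $j<n_m$, the next token lies in the same batch, so $\Delta t_{(m,j)}=0$. Both integrals in \Cref{eq:v7_cost} are then over the degenerate interval $[t_m,t_m]$, so $C_{(m,j)}=0$. The final token $(m,n_m)$ carries $\Delta t=t_{m+1}-t_m$. Hence the batch total equals $\omega_{\mathrm{w}}\int_{t_m}^{t_{m+1}}N_{\mathrm{w}}+\omega_{\mathrm{v}}\int_{t_m}^{t_{m+1}}N_{\mathrm{v}}$. Integrability makes each term finite and well defined.

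\textbf{Step 2: telescope.} By hypothesis the intervals $[t_m,t_{m+1})$ partition $[t_0,T_{\mathrm{hor}}]$; boundary points have measure zero. Additivity of the integral over this disjoint union gives \Cref{eq:ledger_identity}. The step that needs care is the last batch. I must check that the terminal token's $\Delta t$ is truncated at $T_{\mathrm{hor}}$, meaning $t_M=T_{\mathrm{hor}}$ rather than some later event time. This is exactly what the partition assumption asserts, and I would state explicitly that I am using it that way. The same assumption rules out gaps or overlaps between consecutive batches, which is the "nonduplicating" content of the identity.

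\textbf{Step 3: endpoint.} Split $\int_0^{T_{\mathrm{hor}}}=\int_0^{t_0}+\int_{t_0}^{T_{\mathrm{hor}}}$ in the numerator of $Y$ in \Cref{eq:completion_rate}. Substitute \Cref{eq:precontrol_cost} and \Cref{eq:ledger_identity}, then divide by $N_{\mathrm{dep}}>0$ to obtain \Cref{eq:endpoint_ledger}.

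\textbf{Step 4: policy invariance.} Before $t_0$ no action has been applied, so the trajectory on $[0,t_0)$, and $t_0$ itself, are functions of the exogenous realization alone. That fixes $N_{\mathrm{w}},N_{\mathrm{v}}$ on $[0,t_0)$ and therefore fixes $C_{\mathrm{pre}}$. I also need $N_{\mathrm{dep}}$ to be fixed. Here I would read "departed" as passengers whose demand events are generated by the registered demand contract from the exogenous seed over the horizon, independent of the control actions. This last point rests on a modeling interpretation rather than on calculation, so I expect it to be the main obstacle. I would phrase it explicitly as following from the contract: demand generation is exogenous, and first legs are released independently of holding. Later transfer legs are causal, but they do not create new departed passengers, so they do not affect the count.
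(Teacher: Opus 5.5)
Your proposal is correct and follows essentially the same route as the paper: zero-duration intermediate tokens cost nothing, and each batch's final token carries the interval to the next batch or the horizon. Finite additivity over the partition then gives the identity, splitting at $t_0$ and dividing by $N_{\mathrm{dep}}$ gives the endpoint formula, and policy invariance follows from the absence of control before $t_0$ and the exogenous fixing of passenger departures. Your explicit use of the partition hypothesis to truncate the last interval at $T_{\mathrm{hor}}$, and your framing of the $N_{\mathrm{dep}}$ invariance as a consequence of the demand contract rather than of calculation, are the same points the paper's proof relies on; you simply state them more explicitly.
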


\begin{proof}
Every nonfinal token in a simultaneous batch has $\Delta t_k=0$, so both
integrals defining its cost vanish. The final token carries exactly the
interval from that batch to the next physical batch or the horizon.
These intervals do not overlap except at endpoints of zero measure. Finite
additivity of the integrals therefore gives \Cref{eq:ledger_identity}.
Adding the initial interval in \Cref{eq:precontrol_cost} gives the full-horizon
integrals in \Cref{eq:completion_rate}, proving \Cref{eq:endpoint_ledger}.
There is no control before $t_0$, and passenger departures are fixed by the
exogenous demand realization, which proves the final statement.
\end{proof}

\newpage
\subsection{Hurdle density and deterministic action}
\label{app:properties:hurdle}

\Cref{sec:method:architecture} uses the following distributional property to
compute likelihoods and specify deterministic evaluation.

\begin{proposition}[Normalized hurdle policy and positive-branch median]
\label{prop:hurdle}
For finite $g_{\phi}(s)$ and $\mu_{\phi}(s)$ and
$\sigma_{\phi}(s)>0$, let $\mathfrak{n}_{\phi}(\cdot\mid s)$ denote the
normal density with mean $\mu_{\phi}(s)$ and standard deviation
$\sigma_{\phi}(s)$. The conditional positive-branch density is
\begin{equation}
f_{\phi}(a\mid s)=
\frac{2\mathfrak{n}_{\phi}(\zeta(a)\mid s)}{1-a^2},
\qquad
\zeta(a)=\log\frac{1+a}{1-a},\quad -1<a<1,
\label{eq:positive_density}
\end{equation}
where $\zeta(a)$ is the latent coordinate associated with action $a$.
The policy in \Cref{eq:hurdle_policy} is a probability measure, and its
conditional positive-branch median is $2\sigmoid(\mu_{\phi}(s))-1$.
\end{proposition}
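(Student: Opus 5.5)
The plan is to handle the positive branch by a change of variables, then assemble the mixture, then read off the median from monotonicity.

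\textbf{Density of the positive branch.} The map $T(\zeta)=2\sigmoid(\zeta)-1=\tanh(\zeta/2)$ is a strictly increasing $C^1$ bijection from $\R$ onto $(-1,1)$. Its inverse is $\zeta(a)=\log\frac{1+a}{1-a}$, which I will verify by solving $\sigmoid(\zeta)=(1+a)/2$. Differentiating the inverse gives
\[
\frac{\mathrm{d}\zeta}{\mathrm{d}a}=\frac{1}{1+a}+\frac{1}{1-a}=\frac{2}{1-a^2}>0 .
\]
The one-dimensional change-of-variables formula for a monotone bijection then yields $f_{\phi}(a\mid s)=\mathfrak{n}_{\phi}(\zeta(a)\mid s)\,\lvert \mathrm{d}\zeta/\mathrm{d}a\rvert$, which is \Cref{eq:positive_density}. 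The finiteness of $\mu_{\phi}(s)$ and the condition $\sigma_{\phi}(s)>0$ guarantee that $\mathfrak{n}_{\phi}$ is a genuine density.

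\textbf{Normalization.} The same substitution shows $\int_{-1}^{1}f_{\phi}(a\mid s)\,\mathrm{d}a=\int_{\R}\mathfrak{n}_{\phi}(\zeta\mid s)\,\mathrm{d}\zeta=1$, and $f_{\phi}\ge0$. Since $g_{\phi}(s)$ is finite, $p^+_{\phi}(s)=\sigmoid(g_{\phi}(s))\in(0,1)$. The measure in \Cref{eq:hurdle_policy} is therefore a convex combination of the probability measures $\delta_{-1}$ and $f_{\phi}(a\mid s)\,\mathrm{d}a$, the latter supported on $(-1,1)$. It is nonnegative, countably additive, and assigns total mass $(1-p^+)+p^+=1$, so it is a probability measure. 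The two components are mutually singular: the point mass sits at $-1$, which has Lebesgue measure zero and is excluded from $(-1,1)$. It follows that the density $\varpi_{\phi}$ with respect to the point-mass-plus-Lebesgue reference measure is well defined and takes the stated values on each piece.

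\textbf{Median.} Because $T$ is strictly increasing, for any $a\in(-1,1)$ we have $\Pr(a^+\le a)=\Pr(\zeta\le\zeta(a))=\Phi\bigl((\zeta(a)-\mu_{\phi})/\sigma_{\phi}\bigr)$, where $\Phi$ is the standard normal distribution function. This conditional CDF is continuous and strictly increasing, so the median is unique and equals the solution of $\zeta(a)=\mu_{\phi}(s)$, namely $a=T(\mu_{\phi}(s))=2\sigmoid(\mu_{\phi}(s))-1$. To justify the remark in the main text that this is generally not the mean, I will give a single counterexample: take $\mu_{\phi}>0$. Since $T$ is strictly concave on $(0,\infty)$ and $\sigma_{\phi}>0$, the asymmetry makes $\E[T(\zeta)]\ne T(\mu_{\phi})$. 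Precisely, the oddness of $T$ combined with its concavity for positive arguments and the symmetry of the normal gives $\E[T(\zeta)]<T(\mu_{\phi})$.

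\textbf{Main obstacle.} None of the steps is deep. The point requiring the most care is the measure-theoretic bookkeeping that justifies a single density $\varpi_{\phi}$ with respect to the common reference measure. That requires checking the mutual singularity of the two components, which is exactly why the endpoint $-1$ must be excluded from the continuous branch.
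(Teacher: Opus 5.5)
Your proposal is correct and follows the paper's proof essentially step for step. Like the paper, you change variables through the strictly increasing map $\zeta\mapsto 2\sigmoid(\zeta)-1$, add the point mass $1-p^+_{\phi}(s)$ to the unit-mass continuous branch, and use the fact that a strictly increasing transformation preserves the median.

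Your extra argument that the mean differs from the median when $\mu_{\phi}(s)>0$ goes beyond the paper, which only remarks on this. The claim is true, but as written it is a sketch: Jensen's inequality alone does not apply because $\zeta$ also charges the region where $T$ is convex. To complete it, pair $\mu_{\phi}\pm y$ using the symmetry of the normal. For $y>\mu_{\phi}$, use oddness together with subadditivity of $T$ on $[0,\infty)$ to get $\tfrac12\left[T(\mu_{\phi}+y)+T(\mu_{\phi}-y)\right]\le\tfrac12 T(2\mu_{\phi})<T(\mu_{\phi})$.
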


\begin{proof}
The map $a=2\sigmoid(\zeta)-1$ is a strictly increasing bijection from
$\R$ to $(-1,1)$, with derivative
$\mathrm{d}a/\mathrm{d}\zeta=(1-a^2)/2$. Change of variables gives
\Cref{eq:positive_density} and
$\int_{-1}^{1}f_{\phi}(a\mid s)\,\mathrm{d}a=1$.
The continuous mass is therefore $p^+_{\phi}(s)$; adding the point mass
$1-p^+_{\phi}(s)$ yields total mass one. The normal latent variable has median
$\mu_{\phi}(s)$, and a strictly increasing transformation preserves its
median. This proves the deterministic-action statement; nonlinear
transformation does not generally preserve the conditional mean.
\end{proof}

\section{Implementation and provenance details}
\label{app:impl}

\subsection{Structured network implementation}
\label{app:architecture}

The 229-dimensional canonical state has a five-value batch prefix followed by
16 padded 14-value event slots. Each slot contains an active flag, 12
normalized local features, and the stored prior action. The service index is
expanded to a 12-way one-hot code; selected/current status and relative slot
position are appended before encoding. One shared slot multilayer perceptron
maps each local vector to 128 features. The current slot, active-slot mean,
active-slot maximum, previously selected-slot mean, and future-slot mean form
five pooled views. A fusion multilayer perceptron combines these views with the
five-value prefix to produce 256 state features.

The actor maps the fused features through two 256-unit hidden layers to a gate
logit, latent normal mean, and log standard deviation. The
log standard deviation is clipped to $[-5,2]$. The initial gate bias gives a
positive-hold probability of approximately 0.35 and the conditional bias
favors short holds. The critic ensemble contains two independently
parameterized heads after one structured encoder; the target ensemble is an
exact structural copy. The value network has a separate encoder and scalar
head. Main modules use rectified linear unit activations and no layer
normalization.

\subsection{Frozen training values}
\label{app:training}

\Cref{tab:hparams} lists the principal final-profile values. All optimizers use
adaptive moment estimation (Adam). Stored rewards remain physical generalized passenger-seconds; numerical
conditioning is applied only inside the trainer. Actions, rewards,
continuations, durations, and terminal flags remain single precision.

\begin{table*}[tp]
\centering
\small
\caption{Final actor-training and architecture values.}
\label{tab:hparams}
\begin{tabular}{@{}p{0.54\linewidth}p{0.42\linewidth}@{}}
\toprule
Parameter & Value \\
\midrule
Canonical state width & 229 \\
Maximum event slots & 16 \\
Slot embedding and fused widths & 128 and 256, respectively \\
Actor/critic/value hidden widths & 256, 256 \\
Critic heads & 2 \\
Maximum hold & 60 s \\
Critic and simulator-actor minibatches & 256 and 128, respectively \\
Actor/critic/value/entropy learning rates & $3\times10^{-4}$ \\
Ratio learning rate & $10^{-4}$ \\
Target update rate & 0.005 \\
Simulator critic-minibatch fraction & 0.5 \\
IQL expectile & 0.7 \\
IQL exponential multiplier & 3 \\
Maximum IQL actor weight & 100 \\
Mixed-backup coefficient & 0.1 \\
Initial entropy temperature & 1.0 \\
Target entropy & 0.0 \\
Trainer reward scale $\kappa_r$ & $10^{-4}$ \\
Critic Huber threshold & 1.0 \\
Actor interpolation coefficient & $5\times10^{-5}$ \\
Offline IQL updates & 50,000 \\
Frozen-policy warmup & 5,000 events \\
Actor/temperature freeze endpoint & 10,000 events \\
Actor ramp & 10,000--15,000 events; multiplier approximately 0.1 to 1 \\
Ratio pretraining & 1,000 updates \\
Ratio update interval & 4 learner updates \\
Terminal actor taper & 15,000--25,000 events; multiplier linear 1 to 0.25 \\
Total learner updates & 70,000 \\
Fixed actor endpoint & 25,000 simulator events \\
Training seeds & 6201--6205 \\
\bottomrule
\end{tabular}
\end{table*}

The terminal-quarter branch reused each seed's sealed 10,000-event state and
continued without restarting optimizer, replay, or simulator chronology. All
15 terminal-taper branches passed artifact and training-stability checks; only
the preregistered quarter-rate branch supplied the five actors used by the
final candidate. The actor regression uses the weight in
\Cref{eq:iql_weight}; the online critic uses \Cref{eq:critic_target} with
\Cref{eq:mixed_backup,eq:sac_bootstrap}.

\subsection{Ratio diagnostic}
\label{app:ratio}

Target and simulator examples enter each domain-classifier loss in equal
numbers. Offline validation is split by complete environment block, and
simulator validation holds out complete trajectories. A diagnostic random
stream is separate from optimization sampling. \Cref{tab:ratio_diag} reports
the across-seed summary that determined interpretation of the ratio module.

\begin{table}[t]
\centering
\small
\caption{Domain-ratio diagnostics across five sealed actor-training seeds.}
\label{tab:ratio_diag}
\begin{tabular}{lrrr}
\toprule
Diagnostic & Mean & Minimum & Maximum \\
\midrule
Training joint AUC & 0.9477 & 0.9370 & 0.9580 \\
Held-out joint AUC & 0.4520 & 0.4366 & 0.4747 \\
Held-out raw ESS fraction & 0.0615 & 0.0100 & 0.1530 \\
\bottomrule
\end{tabular}
\end{table}

The separation between training and validation AUC, together with low raw
ESS, shows that the classifier did not supply a usable
generalizing correction. The optimizer used unit ratio weights, so the
diagnostic failure did not directly perturb critic updates.

\subsection{Dataset integrity and support}
\label{app:support}

The immutable replay contains 3,284,718 rows, 240 trajectories, 240 terminal
rows, and 1,025,206 zero-duration agent-environment-cycle rows. No nonfinite row
was found, and chronological passenger-ledger reconstruction had zero maximum
error. Exact zero hold accounts for 80.2246\% of actions, while 2,145 rows use
the exact 60-s upper endpoint. The audit groups uncertainty by complete
environment cells rather than treating rows as independent samples.

For the final low-load guard, 128,437 rows (3.91\%) satisfy the physical-state
conditions in the audited replay. Among these rows, 5.81\%, 3.69\%, and 2.17\%
have behavior holds of at least 20, 30, and 40 s, respectively. These values
motivate reporting the final 60-s action as an operational reserve outside
dense logged support.

\subsection{Seed-specific formal effects}
\label{app:seed_effect}

\Cref{tab:seed_effect} shows the candidate-minus-parent effect after averaging
each training seed over the ten formal blocks and three demand levels. The
direction is the same for all seeds.

\begin{table}[t]
\centering
\small
\caption{Formal candidate-minus-parent differences by training seed.}
\label{tab:seed_effect}
\begin{tabular}{lrrr}
\toprule
Seed & $\Delta Y$ (s/departed) & $\Delta R_{\mathrm{cmp}}$ & $\Delta U$ (passengers) \\
\midrule
6201 & $-87.25$ & $+0.00457$ & $-75.43$ \\
6202 & $-97.44$ & $+0.00485$ & $-80.67$ \\
6203 & $-94.28$ & $+0.00478$ & $-79.63$ \\
6204 & $-93.44$ & $+0.00465$ & $-76.97$ \\
6205 & $-88.21$ & $+0.00444$ & $-73.40$ \\
\bottomrule
\end{tabular}
\end{table}

\subsection{Formal evidence package}
\label{app:evidence}

The formal package records 45 physical behaviors, 30 fresh environment cells,
ten complete block manifests, and 1,350 result records. The registered analyzer
reads only endpoint results and implements the exact sign tests and gates in
\Cref{sec:exp:statistics}. A second analyzer independently recomputes result
counts, candidate--parent means, demand and block differences, win counts, and
gate outcomes. A separate closeout verifies source, protocol, environment,
method, scheduler, and file-manifest bindings.

The central machine-readable files are:
\begin{itemize}[leftmargin=*,nosep]
    \item \path{formal_confirmation_analysis_v1.json};
    \item \path{formal_confirmation_analysis_verification_v1.json};
    \item \path{evaluation_closeout_v1.json}; and
    \item \path{evaluation_closeout_verification_v1.json}.
\end{itemize}
\begin{samepage}
They are stored in the following directory:
\begin{quote}
\small
\texttt{reproducibility/}\\
\texttt{h2oplus\_v20\_formal\_confirmation\_results/}
\end{quote}
The synchronized package intentionally contains no checkpoint, optimizer,
dataset, replay, or remote-workspace copy.
\end{samepage}

\bibliographystyle{elsarticle-harv}
\bibliography{references}

\end{document}